\definecolor{darkblue}{rgb}{0, 0, 0.5}
\newtheorem{theorem}{Theorem}
\newtheorem{lemma}[theorem]{Lemma}
\def\Secref#1{Section~\ref{#1}}
\def\eqref#1{equation~\ref{#1}}
\def\1{\bm{1}}
\newcommand{\train}{\mathcal{D}}
\DeclareMathAlphabet{\mathsfit}{\encodingdefault}{\sfdefault}{m}{sl}
\SetMathAlphabet{\mathsfit}{bold}{\encodingdefault}{\sfdefault}{bx}{n}
\newcommand{\E}{\mathbb{E}}
\newcommand{\sigmoid}{\sigma}
\title{Binary Classifier Optimization for Large Language Model Alignment}
\author{
  \textbf{Seungjae Jung\textsuperscript{$\diamondsuit$}} \quad
  \textbf{Gunsoo Han\textsuperscript{$\diamondsuit$}} \quad
  \textbf{Daniel Wontae Nam\textsuperscript{$\diamondsuit$}} \quad
  \textbf{Kyoung-Woon On\textsuperscript{$\clubsuit$}}
  \thanks{Work done at Kakao Corp.}
  \\
  \\
  \textsuperscript{$\diamondsuit$}Kakao Corp\\
  \textsuperscript{$\clubsuit$}LBOX \\
  \texttt{\{sean.ai, coco.upgrade, daniel.rl\}@kakaocorp.com} \\
  \texttt{kyoungwoon.on@lbox.kr} \\
  \\
}
\begin{document}
\maketitle
\begin{abstract}
In real-world services such as ChatGPT, aligning models based on user feedback is crucial for improving model performance.
However, due to the simplicity and convenience of providing feedback, users typically offer only basic binary signals, such as 'thumbs-up' or 'thumbs-down'.
Most existing alignment research, on the other hand, relies on preference-based approaches that require both positive and negative responses as a pair.
We propose Binary Classifier Optimization (BCO), a technique that effectively aligns LLMs using only binary feedback.
BCO trains a binary classifier, where the logit serves as an implicit reward, effectively minimizing the Direct Preference Optimization (DPO) loss.
We demonstrate that the binary cross-entropy loss employed in classifier training acts as an upper bound for the DPO loss.
Additionally, a novel reward shift technique further minimizes the gap between the losses.
We validate our methodology in two settings: first, on a paired preference dataset, where our method performs on par with DPO; and second, on a Likert-5 scale annotation dataset which stems from real users' queries.
Our model consistently demonstrates effective and robust alignment across four base LLMs and three different datasets, showcasing the strength of our approach to learning from binary signals.
\end{abstract}

\section{Introduction}

Aligning Large Language Models (LLMs) has been a crucial ingredient in the deployment of LLMs in production, as pretrained LLMs are prone to generating undesirable outputs. 
\citet{ouyang2022training} introduced Reinforcement Learning with Human Feedback (RLHF), that involves training a reward model based on various completions and their comparisons for a single prompt and then optimizing the LLM to maximize those rewards.
Subsequently, Direct Preference Optimization (DPO) \citep{rafailov2023direct} was proposed as an alternative that circumvents the need for training a reward model by directly optimizing the model based on the preferences between chosen and rejected completions.
Both RLHF and DPO have emerged as the standard choices for LLM alignment, but they still require a comparison dataset with chosen and rejected text completions, which is labor-intensive to collect.

In reality, when it comes to serving LLMs to users, it is much easier to collect binary feedback rather than comparison between two completions.
Popular LLM services such as ChatGPT \citep{ChatGPT}, Gemini \citep{Gemini}, or Claude \citep{Claude} ask users for "thumbs-up" or "thumbs-down" feedbacks.
On the other hand, most existing alignment research relies on preference-base methodologies that require at least two responses and their relative goodness.

Counter to this trend, a recent work called Kahneman-Tversky Optimization (KTO) has been proposed \citep{ethayarajh2024kto}.
KTO, inspired by the Prospect Theory \citep{tversky1992advances} in economics, offers a promising approach to alignment that requires only a single completion per prompt, accompanied by a binary signal of preference, such as a "thumbs-up" or "thumbs-down".
This development increases the possibility of eliminating the laborious process of comparing completions to create preference datasets, making the alignment process more agile and accessible.

Nevertheless, the theoretical connection between alignment from binary signals and DPO has not been thoroughly explored.
Understanding this connection could provide opportunities to further enhance the performance of alignment from binary signals.

In this paper, we present a theoretical foundation for the efficacy of alignment from the binary signals as a binary classification problem.
Our analysis reveals that training a binary classifier, where the logit serves as a reward, effectively maps \{prompt, thumbs-up completion\} pairs to 1 and \{prompt, thumbs-down completion\} pairs to 0, implicitly minimizes the DPO loss.
Specifically, the binary cross-entropy (BCE) loss used in the classifier training serves as an upper bound for minimizing the DPO loss.
Furthermore, we devise a novel reward shift technique that further decreases the gap between the BCE loss and the DPO loss, leading to improved alignment.
Our analysis theoretically and empirically uncovers potential flaws in the reference point used in KTO that can be rectified using our reward shift technique.
Integrating the reward shift technique to the BCE loss, we propose a novel framework for aligning language models using binary signals which we name Binary Classifier Optimization (BCO).

We validate our methodology in two type of datasets: paired preference dataset and real-world Likert-5 scale annotation dataset.
On the paired preference datasets we demonstrate that our method surpasses KTO and performs on par with DPO.
On the real-world Likert-5 scale annotation dataset, empirical results confirm the superiority of BCO over DPO and KTO across four configurations of base LLMs, including Qwen and Llama \citep{qwen2.5, dubey2024llama}, in both small and medium model sizes.

\section{Related Work}

Reinforcement Learning from Human Feedback (RLHF) \citep{ouyang2022training, stiennon2020learning, glaese2022improving, ziegler2019fine} has garnered significant attention as a promising approach for aligning LLMs with human preferences.
While RLHF is effective, it is burdensome as it requires going through three stages: supervised fine-tuning (SFT), reward modeling, and reinforcement learning (RL).
The RL stage is particularly memory-intensive, as it requires loading the policy, reference, reward model, and value function into memory.
The introduction of DPO \citep{rafailov2023direct} has improved the accessibility of LLM alignment by eliminating the reward modeling stage.
DPO directly optimizes the policy to satisfy human preferences using a loss function derived from the Bradley-Terry (BT) model \citep{bradley1952rank}.

One potential drawback of DPO is its susceptibility to overfitting the preference dataset.
To address this issue, Identity Preference Optimization (IPO) \citep{azar2023general} introduces a regularization term to mitigate overfitting.
Rejection Sampling Optimization \citep{liu2023statistical} employs rejection sampling to generate preference pairs from the estimated optimal policy.
Although these methodologies share commonalities with our work, as they offer theoretical insights into the BT model and propose enhanced alignment approaches, they still depend on preference datasets, which sets them apart from our work.

To reduce the effort required to collect preference datasets, methodologies have been proposed that either let the LLM itself perform comparisons of completions \citep{yuan2024self}
or treat the LLM's completions as rejected completions \citep{chen2024self} .
However, none of them utilized binary signals for LLM alignment.

In contrast, KTO \citep{ethayarajh2024kto}, which is inspired by prospect theory \citep{tversky1992advances}, is designed to align LLMs using only thumbs-up and thumbs-down datasets without the need to construct a preference dataset.
In terms of aligning LLMs from binary signals, KTO is the most similar work to ours.
Unlike KTO, we theoretically demonstrate the connection between alignment from binary signals and preference optimization.
Based on this, we present an effective algorithm for robust alignment in real-world scenarios. The detailed differences between our approach, BCO, and KTO are illustrated in \Secref{subsec:distinctions}.

\citet{chen2024noise} proposed Noise Contrastive Alignment (NCA), which enables alignment from explicit rewards.
While NCA allows alignment from binary signals, it requires multiple completions per prompt, differing from BCO/KTO in the scope of problems it can address.
The distinctions between our approach, BCO, and NCA are further elaborated in \autoref{subsec:distinctions}.

\section{Preliminaries} \label{sec:preliminaries}

Aligning LLMs to human preference follows a widely adopted convention from \citet{ouyang2022training}, consisting of three main stages: SFT, reward modelling, and RL.
During SFT, given an input prompt $x$ and an corresponding completion $y$ from the dataset $\train$, the generation probability of $y$ given $x$ i.e. $-\mathbb{E}_{(x, y) \sim \mathcal{D}} \left[ \log p(y \vert x) \right]$ is maximized.
During the reward modelling stage, a separate reward model is trained to assign appropriate scalar rewards that reflect human preference to given \{ prompt, completion \} pairs. 
Finally, RL is applied to further align the model gained from SFT, which typically involves optimizing a policy using the obtained reward model. 

In the RL stage, it is a common practice to incorporate a regularization term that encourages the policy to remain close to the reference model \citep{ziegler2019fine, ouyang2022training}:
\begin{align} \label{eq:rl_obj}
    \mathbb{E}_{(x,y) \sim \mathcal{D}} \left[ r(x,y)\right] - \beta \text{KL}(\pi_\theta(\cdot \mid x) \Vert \pi_{\text{ref}}(\cdot \mid x))
\end{align}

\paragraph{DPO}
While RLHF with trained reward model has been shown to be successful, it yields challenges such as large computational burden and requires an additional training phase.
DPO \citep{rafailov2023direct} demonstrated a clever solution to circumvent the challenges by showing that the policy $\pi_\theta$ can be directly optimized using the preference dataset $\mathcal{D}$ by using the reward-policy relationship derived from \autoref{eq:rl_obj}.
The implicit reward function can be defined as a function of the policy such that $r_\theta(x,y) = \beta \log \frac{\pi_\theta(y \vert x)}{\pi_\text{ref}(y \vert x)}$ without losing generality in the theoretical foundation behind DPO.
Combining the BT model with the reward model, the loss function of DPO is
\begin{align*}
    -\mathbb{E}_{(x,y_w,y_l) \sim \mathcal{D}} \left [ \log \sigma \left( r_\theta(x,y_w) - r_\theta(x,y_l)\right) \right].
\end{align*}
Here, $y_w$ is a chosen completion and $y_l$ is a rejected completion.

\paragraph{KTO} \label{paragraph:kto}
\citet{ethayarajh2024kto} proposed alignment framework that trains on binary signal of thumbs-up or thumbs-down of a completion per prompt.
Given a dataset of \{ prompt, completion \} pairs with respective binary signals, KTO defines a value function
\begin{align} \label{eq:kto_value}
&v_{KTO}(x, y; \theta) \nonumber \\
&= \begin{cases}
\sigma(r_\theta(x, y) - z_\text{ref}) & \text{if } y \sim y_{\text{desirable}} \mid x \\
\sigma(z_\text{ref} - r_\theta(x, y)) & \text{if } y \sim y_{\text{undesirable}} \mid x,
\end{cases}
\end{align}
where $z_\text{ref}$ is a reference point.
In practice, $z_\text{ref}$ is implemented as
\begin{equation} \label{eq:kto_z_ref}
z_{\text{ref}} = \max \left( 0, \frac{1}{|\mathcal{B}|} \sum_{y' \in \mathcal{B} \setminus y} \log \frac{\pi_\theta(y' \vert x)}{\pi_\text{ref}(y' \vert x)} \right)
\end{equation}
for $(x,y) \in \mathcal{B}$ and $\mathcal{B}=\{(x^{(i)}, y^{(i)})\}_{i=1}^B$ is a batch of samples.

Finally, the loss function of KTO is defined as 
\begin{equation} \label{eq:kto_loss}
\mathcal{L}_{\text{KTO}}(\theta) = \mathbb{E}_{(x,y) \sim \mathcal{D}} \left [ w(y) (1 - v_\text{KTO}(x,y; \theta) \right]
\end{equation}
where the weighting factor $w(y)$ is $\lambda_D$ if $y$ is a completion from thumbs-up dataset and $\lambda_U$ if $y$ is a completion from thumbs-down dataset.


\section{Binary Classifier Optimization} \label{sec:binary_classifier_optimization}

In this section, we explore the theoretical foundation that could explain the effectiveness of aligning LLMs using binary signals, which are much easier to collect than pairwise preference datasets.
We propose Binary Classifier Optimization (BCO), a novel approach that achieves robust alignment from binary signals upon the theoretical foundation. 

Throughout the section, we illustrate alignment process in terms of optimizing reward.
It is important to note that implicit reward optimization is sufficient for alignment due to the reward-policy relationship
\begin{equation*}
r_\theta (x, y) = \beta \log \frac{\pi_\theta (y \mid x)}{\pi_{\text{ref}} (y \mid x)}
\end{equation*}
which already has been shown both theoretically and empirically in previous works \citep{rafailov2023direct, azar2023general, ethayarajh2024kto, chen2024noise}.

\subsection{Theoretical Analysis}

For simplicity, let's momentarily assume that $z_{\text{ref}}$ is 0 in \autoref{eq:kto_value}.
As mentioned in \autoref{sec:preliminaries}, the DPO loss minimizes $- \log \sigmoid(r_\theta(x, y_w) - r_\theta(x, y_l))$, while the KTO loss minimizes $- \sigmoid(r_\theta(x, y_w)) - \sigmoid(-r_\theta(x, y_l))$.
By establishing a connection between the two terms, we can bridge the gap between DPO and alignment from binary signals.

\begin{theorem} \label{thm:upper_bound}
For a binary classifier that assigns a reward logit, where \{ prompt, chosen completion \} pairs are mapped to 1 and \{ prompt, rejected completion \} pairs are mapped to 0, minimizing the binary cross-entropy loss between the true and predicted labels upper bounds the direct preference optimization loss. i.e.
\begin{align*}
&\E_{(x, y_w, y_l) \sim \train} [-\log \sigmoid \left( r_\theta (x, y_w) - r_\theta(x, y_l) \right) ] \\
&< 
\E_{(x, y_w, y_l) \sim \train} [- \log \sigmoid(r_\theta(x, y_w))] \\
& \qquad + \E_{(x, y_w, y_l) \sim \train} [- \log \sigmoid \left( - r_\theta (x, y_l) \right)]
\end{align*}
\end{theorem}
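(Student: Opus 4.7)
The plan is to reduce the claim to a pointwise inequality and then verify it by direct algebraic manipulation of sigmoids. Concretely, by linearity of expectation it suffices to show that for every triple $(x, y_w, y_l)$ with implicit rewards $r_w := r_\theta(x, y_w)$ and $r_l := r_\theta(x, y_l)$, one has
\begin{equation*}
-\log \sigma(r_w - r_l) \;<\; -\log \sigma(r_w) \,-\, \log \sigma(-r_l).
\end{equation*}
Since $-\log$ is strictly decreasing, this is equivalent to the multiplicative statement $\sigma(r_w - r_l) > \sigma(r_w)\,\sigma(-r_l)$.

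Next I would expand both sides using the definition $\sigma(a) = 1/(1 + e^{-a})$. The left-hand side is $1/(1 + e^{r_l - r_w})$, while the right-hand side expands to
\begin{equation*}
\frac{1}{(1 + e^{-r_w})(1 + e^{r_l})} \;=\; \frac{1}{1 + e^{-r_w} + e^{r_l} + e^{r_l - r_w}}.
\end{equation*}
Comparing denominators (both positive), the desired inequality becomes $1 + e^{r_l - r_w} < 1 + e^{-r_w} + e^{r_l} + e^{r_l - r_w}$, i.e.\ $0 < e^{-r_w} + e^{r_l}$, which is manifestly true since exponentials are strictly positive. This establishes the pointwise inequality with strict inequality, so taking expectations over $\train$ preserves the strict inequality (the gap is strictly positive for every sample), yielding the theorem.

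There is no real obstacle; the argument is a short algebraic identity once one notices the key observation that the product $\sigma(r_w)\sigma(-r_l)$ has the same dominant term $e^{r_l - r_w}$ in its denominator as $\sigma(r_w - r_l)$ does, plus two strictly positive extra terms $e^{-r_w}$ and $e^{r_l}$. The only place where care is needed is to emphasize that the inequality is \emph{strict} at every point, so that it survives taking expectations, and to remind the reader that the reward-policy parametrization $r_\theta(x,y) = \beta \log(\pi_\theta(y\mid x)/\pi_\text{ref}(y\mid x))$ is what allows the BCE loss on a logit-based classifier to be interpreted as an upper bound on the DPO loss itself.
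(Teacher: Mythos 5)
Your proof is correct and is essentially the same argument as the paper's: the paper isolates the pointwise step as a lemma, $\log \sigma(x+y) > \log \sigma(x) + \log \sigma(y)$, proved by expanding $(1+e^{-x})(1+e^{-y}) = 1 + e^{-(x+y)} + e^{-x} + e^{-y}$ and noting the two strictly positive extra terms, then applies it with $x = r_\theta(x,y_w)$, $y = -r_\theta(x,y_l)$ and linearity of expectation. Your denominator comparison is the identical computation written directly in terms of $r_w$ and $r_l$.
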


To prove the above theorem, we prove the lemma below.

\begin{lemma} \label{lem:log_bound}
The log of sigmoid of a sum exceeds the sum of the logs of the sigmoids. i.e. $\log \sigmoid (x + y) > \log \sigmoid(x) + \log \sigmoid(y)$ for all $x, y \in \mathbb{R}$ 
\end{lemma}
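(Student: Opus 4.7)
The plan is to reduce the inequality to an elementary algebraic fact about exponentials by unpacking the definition of the sigmoid. Recall that $\sigma(t) = 1/(1+e^{-t})$, so $\log \sigma(t) = -\log(1+e^{-t})$. The claimed inequality $\log \sigma(x+y) > \log \sigma(x) + \log \sigma(y)$ is therefore equivalent, after multiplying through by $-1$ and exponentiating, to
\begin{equation*}
(1+e^{-x})(1+e^{-y}) > 1 + e^{-(x+y)}.
\end{equation*}

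The second step is to expand the left-hand side: $(1+e^{-x})(1+e^{-y}) = 1 + e^{-x} + e^{-y} + e^{-(x+y)}$. Subtracting the right-hand side from this yields $e^{-x} + e^{-y}$, which is strictly positive for every $x, y \in \mathbb{R}$. This finishes the argument, and the strictness of the inequality is immediate since $e^{-x}, e^{-y} > 0$.

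There is not really a hard step here; the only thing worth flagging is that one must keep track of signs carefully when passing between $\log \sigma$ and $\log(1+e^{-t})$, and observe that the monotonicity of $\log$ preserves the strict inequality when we exponentiate. Once Lemma~\ref{lem:log_bound} is established in this form, Theorem~\ref{thm:upper_bound} will follow by applying it pointwise with $x \mapsto r_\theta(x,y_w)$ and $y \mapsto -r_\theta(x,y_l)$ and taking expectations over $(x,y_w,y_l) \sim \train$.
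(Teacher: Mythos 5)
Your proof is correct and follows essentially the same route as the paper's: both reduce the inequality via $\log\sigma(t) = -\log(1+e^{-t})$ to comparing $1+e^{-(x+y)}$ with $(1+e^{-x})(1+e^{-y}) = 1+e^{-(x+y)}+e^{-x}+e^{-y}$, and conclude from the strict positivity of $e^{-x}+e^{-y}$. Your closing remark about how the lemma feeds into Theorem~\ref{thm:upper_bound} also matches the paper's usage.
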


See \autoref{subsec:sigmoid_lemma_proof} for the proof.
Simply applying Lemma \ref{lem:log_bound} and linearity of expectation concludes the proof of \autoref{thm:upper_bound}.
\begin{align}
&\E_{(x, y_w, y_l) \sim \train} [-\log \sigmoid \left( r_\theta (x, y_w) - r_\theta(x, y_l) \right) ] \nonumber \\
&< \E_{(x, y_w, y_l) \sim \train} [- \log \sigmoid(r_\theta(x, y_w)) \nonumber \\
& \qquad \qquad \qquad - \log \sigmoid(- r_\theta(x, y_l))] \label{eq:before_expectation_linearity} \\
&= \E_{(x, y_w, y_l) \sim \train} [- \log \sigmoid(r_\theta(x, y_w))] \nonumber \\
& \qquad + \E_{(x, y_w, y_l) \sim \train} [- \log \sigmoid(- r_\theta(x, y_l))] \label{eq:binary_cross_entropy}
\end{align}
Equation \ref{eq:binary_cross_entropy} is the binary cross-entropy (BCE) loss, where the logit of the binary classifier is the reward implicitly defined by the policy and reference models.
Since the BCE loss serves an upper bound of the DPO loss, LLM alignment can be performed using only binary signals.

According to \autoref{eq:upper_bound} in \autoref{subsec:sigmoid_lemma_proof}, the tightness of the BCE loss as a bound for the DPO loss depends on the error term $e^{-x} + e^{-y}$ where $x = r_\theta(x, y_w)$ and $y = - r_\theta(x, y_l)$.
As training progresses and the BCE loss is minimized, the magnitude of $r_\theta(x, y_w)$ increases while the magnitude of $r_\theta(x, y_l)$ decreases, leading to decrease of the error term.
Consequently, the BCE loss becomes a tighter bound for the DPO loss. 
Empirical evidence presented in \autoref{sec:experiments} demonstrates that, despite the presence of an error term, alignment progresses solely with the BCE loss.

\subsection{Reward Shift} \label{subsec:reward_shift}

We further minimize the error term $e^{-r_\theta(x, y_w)} + e^{r_\theta(x, y_l)}$ by reward shift.

Consider the case where the reward is shifted by $\delta$ in \autoref{eq:before_expectation_linearity}.
That says,
\begin{align*}
\E_{(x, y_w, y_l) \sim \train} [
&- \log \sigmoid(r_\theta(x, y_w) - \delta) \\
&- \log \sigmoid(- (r_\theta(x, y_l) - \delta))]
\end{align*}
The binary cross-entropy loss still holds as an upper bound of the DPO loss. 

\begin{theorem} \label{thm:upper_bound_hold_under_shift}
Binary cross entropy is an upper bound of Direct Preference Optimization loss even if the reward is shifted by a constant $\delta$. i.e.
\begin{align*}
&\E_{(x, y_w, y_l) \sim \train} [-\log \sigmoid \left( r_\theta (x, y_w) - r_\theta(x, y_l) \right) ] \\
&< \E_{(x, y_w, y_l) \sim \train} [- \log \sigmoid(r_\theta(x, y_w) - \delta) \\
& \qquad \qquad - \log \sigmoid(- (r_\theta(x, y_l) - \delta))]
\end{align*}
\end{theorem}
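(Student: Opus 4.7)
The plan is to reduce Theorem \ref{thm:upper_bound_hold_under_shift} to Lemma \ref{lem:log_bound} essentially for free, by exploiting the translation invariance of the reward difference $r_\theta(x, y_w) - r_\theta(x, y_l)$. The key observation is that the DPO loss on the left-hand side depends only on the \emph{difference} of the two rewards, so subtracting the same constant $\delta$ from both arguments leaves it unchanged; the BCE-style expression on the right-hand side, however, is affected by the shift. This asymmetry is exactly what Lemma \ref{lem:log_bound} is designed to exploit.

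First I would set $a := r_\theta(x, y_w) - \delta$ and $b := -\,(r_\theta(x, y_l) - \delta)$. Then by construction $a + b = r_\theta(x, y_w) - r_\theta(x, y_l)$, so Lemma \ref{lem:log_bound} applied to $(a, b)$ yields
\begin{align*}
\log \sigmoid\bigl(r_\theta(x, y_w) - r_\theta(x, y_l)\bigr)
&= \log \sigmoid(a + b) \\
&> \log \sigmoid(a) + \log \sigmoid(b) \\
&= \log \sigmoid(r_\theta(x, y_w) - \delta) \\
&\quad + \log \sigmoid(-(r_\theta(x, y_l) - \delta)).
\end{align*}
Negating the inequality and then taking expectation over $(x, y_w, y_l) \sim \train$, together with linearity of expectation exactly as in the proof of Theorem \ref{thm:upper_bound}, yields the claimed bound.

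There is no real obstacle here: the theorem is essentially a corollary of Lemma \ref{lem:log_bound} obtained by noting that the lemma was never applied at a distinguished point, so any parametrization of $(a, b)$ satisfying $a + b = r_\theta(x, y_w) - r_\theta(x, y_l)$ yields a valid upper bound. The only minor thing to be careful about is presenting the substitution cleanly so the reader sees that the sum $a + b$ is $\delta$-invariant, which is the whole point and is what later motivates choosing $\delta$ to tighten the bound via $e^{-a} + e^{-b}$ as discussed after Theorem \ref{thm:upper_bound}.
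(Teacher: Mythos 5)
Your proposal is correct and is essentially identical to the paper's own proof: both rewrite the reward difference as $(r_\theta(x, y_w) - \delta) - (r_\theta(x, y_l) - \delta)$, apply Lemma~\ref{lem:log_bound} to the two shifted arguments, and conclude by negation and linearity of expectation. The observation that the left-hand side is invariant under the shift while the right-hand side is not is exactly the mechanism the paper uses.
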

See \autoref{subsec:upper_bound_proof} for the proof.
Expanding the inside of the expectation as in the proof of Lemma \ref{lem:log_bound} in \autoref{subsec:sigmoid_lemma_proof}, we get the error term
\begin{equation*}
e^{-(r_\theta(x, y_w) - \delta)} + e^{r_\theta(x, y_l) - \delta}  
\end{equation*}

Setting appropriate $\delta$ minimizes the error term, leading to closer gap between the BCE loss and the DPO loss.
\begin{theorem} \label{thm:optimal_delta}
The minimum of the error term $e^{-(r_\theta(x, y_w) - \delta)} + e^{r_\theta(x, y_l) - \delta}$ can be achieved when $\delta = (r_\theta(x, y_w) + r_\theta(x, y_l)) / 2$
\end{theorem}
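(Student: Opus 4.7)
The plan is to treat the error term as a one-dimensional function of $\delta$ and apply a standard first-order optimality argument. Writing $a = r_\theta(x, y_w)$ and $b = r_\theta(x, y_l)$ for brevity, I would define
\begin{equation*}
f(\delta) = e^{-(a - \delta)} + e^{b - \delta} = e^{\delta - a} + e^{b - \delta},
\end{equation*}
and observe that $f$ is a sum of two exponentials in $\delta$, hence strictly convex on $\mathbb{R}$. Because $f(\delta) \to \infty$ as $\delta \to \pm\infty$, strict convexity guarantees that any critical point is the unique global minimum.

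Next I would compute $f'(\delta) = e^{\delta - a} - e^{b - \delta}$ and solve $f'(\delta) = 0$. This gives $e^{\delta - a} = e^{b - \delta}$, i.e. $\delta - a = b - \delta$, so $\delta^\star = (a + b)/2 = (r_\theta(x, y_w) + r_\theta(x, y_l))/2$. Together with the convexity remark above, this identifies $\delta^\star$ as the global minimizer, which is the claim of \autoref{thm:optimal_delta}.

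As an alternative (and perhaps more transparent) route, I would note that AM--GM immediately yields
\begin{equation*}
e^{\delta - a} + e^{b - \delta} \;\geq\; 2\sqrt{e^{\delta - a}\cdot e^{b - \delta}} \;=\; 2\, e^{(b-a)/2},
\end{equation*}
with equality precisely when $e^{\delta - a} = e^{b - \delta}$, which again forces $\delta = (a+b)/2$. This version also has the pleasant side effect of exhibiting the minimal value $2e^{(r_\theta(x, y_l) - r_\theta(x, y_w))/2}$, making explicit how the residual gap between the BCE and DPO losses shrinks as the reward margin $r_\theta(x, y_w) - r_\theta(x, y_l)$ grows.

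There is no real obstacle here: the statement is a one-variable convex optimization problem whose proof fits in a few lines. The only subtlety worth flagging is that the theorem as stated concerns the \emph{pointwise} error term for a fixed $(x, y_w, y_l)$, so the ``optimal'' $\delta$ depends on the sample; in the training implementation one must therefore interpret $\delta$ as a single scalar (or running statistic) approximating $\mathbb{E}[(r_\theta(x,y_w)+r_\theta(x,y_l))/2]$, but this interpretive point lies outside the formal content of the theorem itself.
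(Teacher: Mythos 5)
Your proposal is correct, and your AM--GM ``alternative route'' is precisely the argument the paper gives in its own proof (\autoref{subsec:optimal_delta_proof}), down to exhibiting the minimal value $2\sqrt{e^{r_\theta(x,y_l)-r_\theta(x,y_w)}}$ and the equality condition $e^{-(r_\theta(x,y_w)-\delta)} = e^{r_\theta(x,y_l)-\delta}$. The convexity/first-order-condition version is an equivalent few-line variant, and your closing remark about the pointwise-versus-aggregate interpretation of $\delta$ correctly anticipates how the paper resolves it in \autoref{eq:bco_delta}.
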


See \autoref{subsec:optimal_delta_proof} for the proof.
Hence, for alignment using binary signals, we define $\delta$ as follows:
\begin{equation} \label{eq:bco_delta}
\delta = \frac{\E_{(x, y) \sim \train^+} [r_\theta(x, y)] + \E_{(x, y) \sim \train^-} [r_\theta(x, y)]}{2}
\end{equation}
Here, $\train^+$ and $\train^-$ denote thumbs-up and thumbs-down datasets of prompt-completion pairs respectively.
Consequently, the BCO loss for a binary signal dataset can be expressed as:
\begin{align}
&\E_{(x, y_w, y_l) \sim \train^+} [- \log \sigmoid(r_\theta(x, y_w) - \delta)] \nonumber \\
& \qquad + \E_{(x, y_w, y_l) \sim \train^-} [- \log \sigmoid(- (r_\theta(x, y_l) - \delta))] \label{eq:bco_objective}
\end{align}
To enhance training stability, we utilize an exponential moving average when computing $\delta$.
The efficacy of this reward shift approach is empirically demonstrated in \autoref{sec:experiments}.

\subsection{Distinctions from Prior Works} \label{subsec:distinctions}

So far, we delved into the connection between BCO and DPO, demonstrating BCO's applicability to alignment from binary signal scenarios.
This subsection delineates the key distinctions between BCO and variants of DPO.

KTO is the first DPO variant we will contrast with BCO.
Both algorithms are quite similar in that they enable alignment from binary signals, meaning they can learn even when only one completion is provided for a single prompt along with user feedback.
However, despite the similarity, there are two critical distinctions between the two algorithms.

While BCO objective in \autoref{eq:bco_objective} optimizes the logsigmoid, KTO objective in \autoref{eq:kto_loss} optimizes the sigmoid. This distinction becomes more apparent when differentiating the objectives.
For simplicity of analysis, assume $z_{ref}$ and $\delta$ are both zero. 
\begin{align*}
\nabla_\theta \mathcal{L}_{\text{BCO}} &= \mathbb{E}_{x, y \sim \train} [\sigmoid(-r_\theta) \nabla_\theta \beta \log \pi_\theta (y \mid x)] \\
\nabla_\theta \mathcal{L}_{\text{KTO}} &= \mathbb{E}_{x, y \sim \train} [\sigmoid (r_\theta) \sigmoid(-r_\theta) \nabla_\theta \beta \log \pi_\theta (y \mid x)]
\end{align*}

Here, $r_\theta = r_\theta (x, y)$.
For brevity, we derive the gradients for the case where $y \sim y_{\text{desirable}}$.
The difference between the gradients of the two algorithms depends on the presence of the sigmoid term $\sigmoid(r_\theta(x, y))$.
In KTO, $\sigmoid(r_\theta(x, y))$ causes samples $(x, y)$ with low rewards to be learned less, whereas BCO does not vanish the gradients for such low-reward samples.
A similar analysis can be conducted for $y \sim y_{\text{undesirable}}$, where BCO better preserves the gradients for high-reward $(x, y)$ samples.
In brief, BCO should be employed if one wishes to treat all data samples equitably.

BCO and KTO also differ in their reward shifting approach.
BCO takes the average implicit reward of $(x, y)$ as the reference point, while KTO adopts the average reward of $(x, y^\prime)$, where $y^\prime$ is a unrelated completion of $x$, as the reference point.
Notably, KTO's reference point is clipped at zero to ensure it remains positive.
Ultimately, this zero clipping hinders seamless model training.
According to the KTO loss, for $y \sim y_{\text{desirable}}$, the implicit reward is increased relative to the reference point, and for $y \sim y_{\text{undesirable}}$, it is decreased relative to the reference point.
Consequently, the average implicit reward remains anchored at the reference point.
However, as pointed out by \citet{rafailov2024from}, the average implicit reward is equivalent to $- \beta \text{KL} (\pi_{\text{ref}} (\cdot \mid x) \Vert \pi_\theta (\cdot \mid x))$ \footnote{See \autoref{sec:average_implicit_reward} for detailed explanation of why average implicit reward is equivalent to KL}, which needs to decrease.
Otherwise, $\pi_\theta$ stay too close to $\pi_{\text{ref}}$ and will not effective learn from preference data.
Therefore, KTO's reference point zero clipping obstructs training, as elaborated in \autoref{subsec:effect_of_reward_shift}.
In contrast, BCO avoids this issue by setting the reference point as the average implicit reward without artificial clipping.

The second DPO variant to contrast with BCO is NCA \citep{chen2024noise}.
When learning from a preference dataset, NCA's loss takes the following form:
\begin{equation*}
- \log \sigmoid (r_\theta (x, y_w)) - \frac{1}{2} \sum_{y \in \{ y_w, y_l \}} \log \sigmoid (r_\theta (x, y))
\end{equation*}
The presence of $\log \sigmoid(r_\theta(x, y))$ in the loss bears similarity to BCO's loss.
However, as evident from the latter term of the objective, computing the partition function is required, necessitating multiple completions for a given prompt.
Consequently, direct alignment from user feedback is infeasible, limiting the scope of problems NCA can address compared to BCO.

\section{Experiments} \label{sec:experiments}

In this section, we compare BCO with offline preference tuning methods. \footnote{As recent works \citep{xu2024is, tang2024understanding} have revealed that online methods outperform offline methods, we do not include PPO \citep{Schulman2017ProximalPO} as compared methods.}
To investigate the effect of reward shift, we augment the compared methods with BCE, where $\delta$ is set to 0 in the BCO objective in \autoref{eq:bco_objective}.
We aim to answer three key research questions:
1) Does the simple BCE loss fuses alignment capability to LLMs?
2) Does the proposed reward shift technique contribute to the alignment process?
3) What is the advantage of BCO over DPO?

\subsection{Experimental Setup}
\label{subsec:experimental_setup}

\paragraph{Dataset}
\label{para:dataset}

We utilize three publicly available preference datasets: UltraFeedback\footnote{https://huggingface.co/datasets/trl-lib/ultrafeedback\_binarized} \citep{cui2023ultrafeedback}, Capybara\footnote{https://huggingface.co/datasets/trl-lib/Capybara-Preferences} \citep{daniele2023amplify-instruct}, and HelpSteer2 \citep{wang2024helpsteer2}.
UltraFeedback and Capybara provide sets of chosen and rejected responses for each prompt.
The HelpSteer2 dataset includes prompts, completions, and various metrics, such as a helpfulness score.
Each prompt is associated with two alternative completions, enabling its conversion into a paired preference dataset.

\paragraph{Model}
\label{para:model}

Our experiments involve four model classes: \texttt{Llama-3.2-3B}, \texttt{Llama-3.1-8B} \citep{dubey2024llama}, \texttt{Qwen2.5-3B}, and \texttt{Qwen2.5-7B} \citep{qwen2.5}.
Unless specified otherwise, we initially conduct Supervised Fine-Tuning (SFT) using the respective datasets.
The chosen response is used as the SFT target as it is recommended by \citet{rafailov2023direct}.
Detailed training specifications are available in \autoref{app:implement}.
We maintain consistent hyperparameters across all experiments, with the exception of the number of training epochs.
Furthermore, for all experiments evaluating win rate, \texttt{gpt-4o-2024-08-06} serves as the evaluation judge.



\subsection{Experiments on the Preference Dataset}
\label{subsec:preference_dataset_experiments}

As illustrated in \autoref{fig:pref_combined}, the performance of KTO surpasses that of SFT, yet it generally falls short of DPO across most configurations.
Similarly, employing a basic BCE loss results in diminished performance when compared to DPO.
Nonetheless, it is important to note that the simple BCE loss consistently outperforms the SFT model in all instances, suggesting that BCE loss contributes to enhancing the alignment capability of LLMs.
On the other hand, we observe a notable improvement in performance when applying reward shift compared to BCE.
This enhancement, coupled with a reduction in error terms, empirically underscores the beneficial impact of reward shifts, as outlined in \autoref{subsec:reward_shift}.
In most scenarios, BCO achieves performance levels comparable to DPO.
While BCO shows superior outcomes over DPO in training models such as \texttt{Llama-3.1-8B} and \texttt{Qwen2.5-7B} with the UltraFeedback dataset, the discrepancy in their performance is not statistically significant.

\begin{figure}[t]
    \centering
    \begin{subfigure}{0.46\textwidth}
        \centering
        \includegraphics[width=\textwidth]{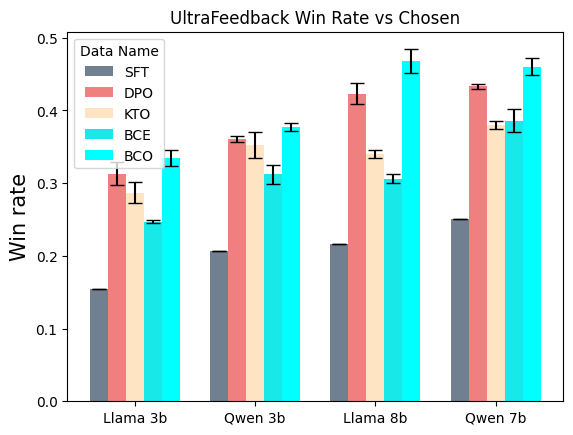}
        \caption{UltraFeedback}
        \label{fig:pref_ultrafeedback}
    \end{subfigure}
    \begin{subfigure}{0.46\textwidth}
        \centering
        \includegraphics[width=\textwidth]{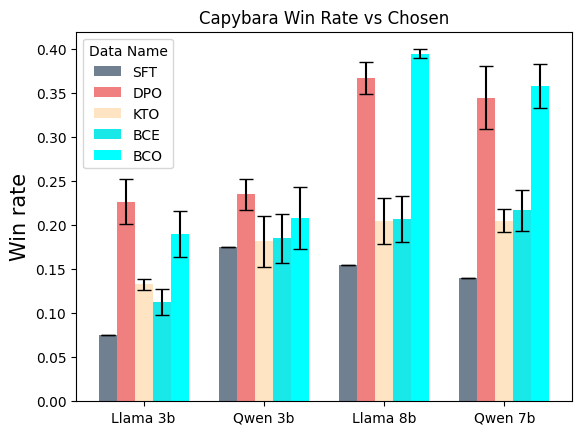}
        \caption{Capybara}
        \label{fig:pref_capybara}
    \end{subfigure}
    \caption{
    Win rates computed by GPT-4o on UltraFeedback and Capybara datasets.
    The win rates are calculated against chosen completions in the test set.
    Depicted mean and standard deviation of the win rates are obtained from three different random seeds.
    }
    \label{fig:pref_combined}
\end{figure}

\subsection{Experiments on the Likert-5 Scale Dataset}

To demonstrate the superiority of BCO over DPO, we present experimental results using a dataset with Likert-5 scale feedback.
We select the HelpSteer2 dataset \citep{wang2024helpsteer2} for alignment purposes for two main reasons.
First, its reward model, trained using the \texttt{Llama-3-70B} base model \citep{dubey2024llama}, demonstrated exceptional performance in the RewardBench benchmark \citep{lambert2024rewardbench}.
Second, the dataset closely resembles real-world data, as most of its prompts originate from ShareGPT \citep{sharegpt}.
To facilitate DPO training, we transformed the HelpSteer2 training set into a preference dataset following the methodology outlined by \citep{wang2024helpsteer2}.
In the HelpSteer2 dataset, each prompt is paired with two completions that are assigned helpfulness scores.
The response with the higher helpfulness score is designated as the preferred choice, while the other is considered as the rejected response.
Pairs with identical helpfulness scores were excluded from this process.

To facilitate the training for both BCO and KTO, we convert HelpSteer2 dataset into a binary signal dataset.
In this conversion, a helpfulness score of 4 is mapped to a thumbs-up, while scores of 3 or below are mapped to a thumbs-down.
To ensure a fair comparison with DPO, only the prompts used in DPO training are included in the binary signal dataset.
See \autoref{tab:stat_helpsteer2} for statistics after processing and \autoref{sec:qualititive_results} for the generated response of each methodology.

\begin{figure}[t]
  \includegraphics[width=0.96\columnwidth]{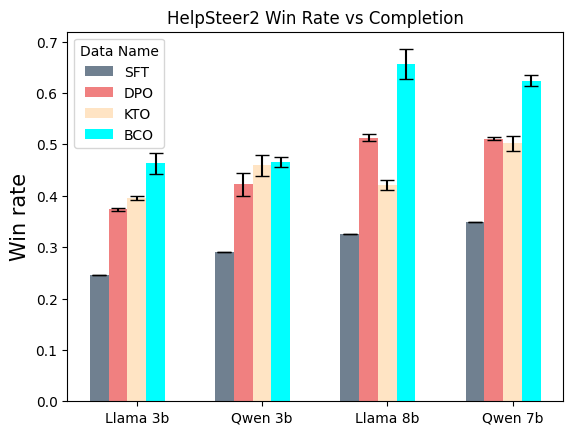}
  \caption{
    Win rates computed by GPT-4o for HelpSteer2 dataset.
    The win rates are calculated against completions in the test set.
    Depicted mean and standard deviation of the win rates are obtained from three different random seeds.
  }
  \label{fig:likert_helpsteer}
\end{figure}

As shown in \autoref{fig:likert_helpsteer}, KTO outperforms DPO only in small-sized models.
In contrast, BCO outperforms DPO across all models.
In summary, \autoref{fig:likert_helpsteer} illustrates that, for the purpose of model alignment, converting a Likert-5 scale dataset directly into a binary signal dataset is not only feasible but may also yield superior performance.

\begin{table*}[t]
    \centering
    \resizebox{\linewidth}{!}{
    \begin{tabular}{|l|l|l|c|c|}
        \hline
        & MT Bench & AlpacaEval & Arena-Hard & Length \\ \midrule
        Llama-3.1-8B-Instruct & 8.28 & 20.9 & 26.86 (-2.0, 2.0) & 830\\
        + DPO \citep{rafailov2023direct} & 8.17 $\pm$ 0.10 & 25.99 $\pm$ 0.48 & 29.50 (-2.3, 1.9) & 746 \\ 
        + IPO \citep{azar2023general} & 8.19 $\pm$ 0.05 & \textbf{30.31 $\pm$ 0.75} & 20.39 (-1.9, 1.9) & \textbf{432} \\
        + KTO \citep{ethayarajh2024kto} & 8.24 $\pm$ 0.10 & 23.42 $\pm$ 0.81 & 27.81 (-1.8, 2.1) &  707 \\ 
        + NCA \citep{chen2024noise} & \textbf{8.32 $\pm$ 0.14} & 25.63 $\pm$ 0.18 & 29.64 (-2.3, 2.0) & 728 \\
        + BCO (Ours) & \textbf{8.32 $\pm$ 0.10} & 28.61 $\pm$ 0.21 & \textbf{31.37} (-2.1, 2.2) & 762 \\ \midrule
        Qwen2.5-7B-Instruct & 8.43 & 31.43 & 47.73 (-2.4, 2.3) & 776 \\
        + DPO & 8.40 $\pm$ 0.10 & 30.95 $\pm$ 1.28 & 47.82 (-2.5, 2.5) & 772 \\ 
        + IPO & 8.48 $\pm$ 0.14 & \textbf{32.97 $\pm$ 1.47} & 50.18 (-2.6, 2.3)  & \textbf{636} \\
        + KTO & 8.30 $\pm$ 0.07 & 28.25 $\pm$ 0.68 & 47.76 (-1.9, 2.6) &  775 \\
        + NCA & 8.45 $\pm$ 0.16 & 30.24 $\pm$ 0.70 & 48.98 (-2.7, 2.3) & 791 \\
        + BCO (Ours) & \textbf{8.59 $\pm$ 0.04} & 30.54 $\pm$ 0.53 & \textbf{50.60} (-2.1, 2.2) & 754 \\ \hline
    \end{tabular}
    }
    \caption{
    Alignment benchmark results for models are presented.
    The alignment training is conducted on the \texttt{Llama-3.1-8B-Instruct} and \texttt{Qwen2.5-7B-Instruct} models.
    All models are trained using HelpSteer2 dataset.
    For the MT Bench and AlpacaEval 2.0 Length Controlled, the mean and standard deviations across three different random seeds are reported.
    For the reference models, we conduct only a single evaluation, so the standard deviations are set to zero.
    For the Arena-Hard benchmark, the win rate against the GPT-4-0314 model, along with the confidence intervals, is provided.
    The length column indicates the average number of tokens generated in the Arena-Hard benchmark.
    }
    \label{tab:alignment_benchmark_results}
\end{table*}

\subsection{Evaluation on Chat Benchmarks}

To further validate the superiority of BCO on well-known alignment benchmarks, we measure the MT Bench \citep{zheng2023judging}, AlpacaEval 2.0 Length Controlled (LC) \citep{dubois2024lengthcontrolled}, and Arena-Hard \citep{li2024arenahard} scores of models.
All models are trained using HelpSteer2 dataset.
\autoref{tab:alignment_benchmark_results} presents the benchmark performance results after applying alignment methods, using \texttt{Llama-3.1-8B-Instruct} and \texttt{Qwen2.5-7B-Instruct} as the reference models.

Except for the AlpacaEval 2.0 LC performance, BCO outperforms other methodologies.
For AlpacaEval 2.0 LC performance, we observe that only IPO clearly outperforms BCO.
Additionally, it is encouraging that, in the Arena-Hard benchmark, BCO demonstrates superior performance despite having a generated token length similar to that of DPO.

\subsection{Effect of Reward Shift} \label{subsec:effect_of_reward_shift}

\begin{figure}[ht]
  \includegraphics[width=0.9\columnwidth]{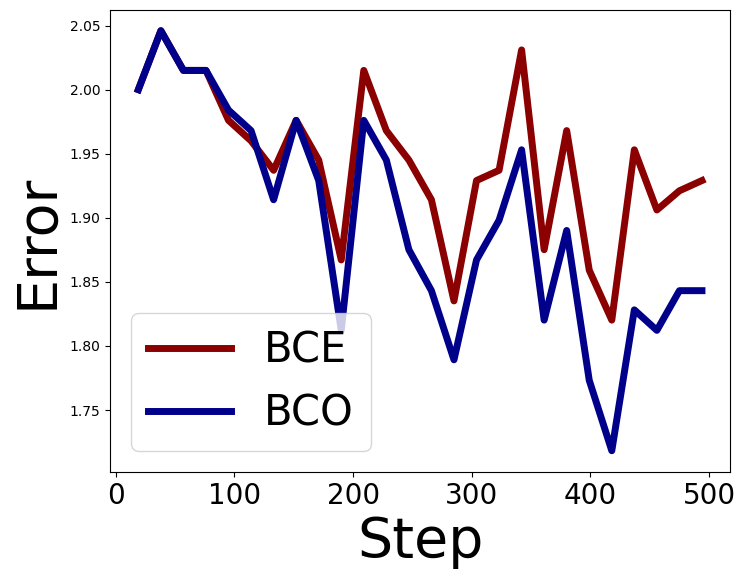}
  \caption{
    Error term values per step on the UltraFeedback dataset are presented.
    These values are derived from the training of the SFT variant of the \texttt{Llama-3.2-3B} model.
    Note that the only difference between BCE and BCO is the existence of $\delta$ in \autoref{eq:bco_delta}.
  }
  \label{fig:error_term_values}
\end{figure}

As described in \autoref{subsec:reward_shift}, appropriately adjusting the reward shift decreases the error term resulting with a tighter bound on the DPO loss.
In order to empirically show the effect of reward shift on the error term, we record the change in the error term yielded by BCE and BCO as the learning progresses in \autoref{fig:error_term_values}. 
The figure shows that, with our choice of reward shift, BCO achieves smaller error term compared to BCE, where the reward shift $\delta = 0$.

\begin{figure}[t]
    \centering
    \begin{subfigure}{0.48\textwidth}
        \centering
        \includegraphics[width=\textwidth]{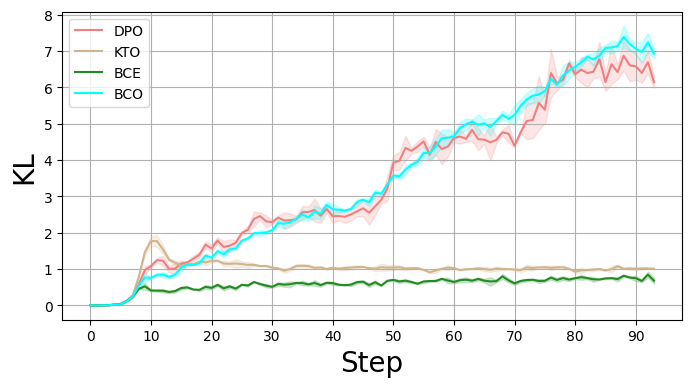}
        \caption{KL by step of alignment methods}
        \label{fig:bco_delta}
    \end{subfigure}
    \begin{subfigure}{0.48\textwidth}
        \centering
        \includegraphics[width=\textwidth]{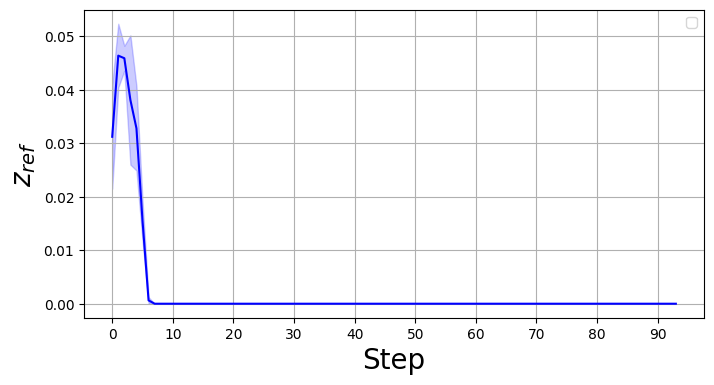}
        \caption{$z_{\text{ref}}$ by step in KTO}
        \label{fig:kto_z_ref}
    \end{subfigure}
    \caption{
        (a) Approximate KL divergence of different algorithms measured using log ratios.
        The plot shows BCO and DPO reaching a relatively high similar KL values while KTO and BCE similarly converging to relatively low KL values. 
        (b) Progress of reference point $z_\text{ref}$ in KTO training.
        The values are taken from \texttt{Llama-3.1-8B} training on Capybara dataset.
        We observed that $z_{\text{ref}}$ consistently collapses to zero. 
    }
    \label{fig:baseline_plots}
\end{figure}

We also compare the effect of reward shift on the KL divergence between the resulting models and the reference model.
Using the relationship between the expected log ratio and the KL divergence \citep{rafailov2024from}, we plot $\text{KL} (\pi_{\text{ref}} \Vert \pi_\theta)$ of BCO and BCE in \autoref{fig:bco_delta}.
The figure shows that while BCE converges at a relatively small KL divergence, BCO is able to match the KL divergence reached by DPO.

Gathering the empirical observations, we conjecture that appropriate reward shift minimizes the error term and the resulting model further assimilates that of DPO.
On the other hand, in the absence of the reward shift, the model converges to a point much closer to the reference model. 
The performance relative to the KL divergence is then conveyed by the significant performance gap between BCE and BCO in \autoref{fig:pref_ultrafeedback}.

A similar observation can be made for KTO as well. 
First, we show the behavior of $z_\text{ref}$ of KTO during the learning process in \autoref{fig:kto_z_ref}.
The plot displays $z_\text{ref}$ collapsing to 0 at early stage of the training.
When $z_\text{ref}=0$, as discussed in \autoref{subsec:distinctions}, the only difference between KTO and BCE is the existence of sigmoid term $\sigmoid (r_\theta)$ in their gradients.
This leads a possible connection between KTO and BCE and their similarities in performance shown in \autoref{fig:pref_combined} and \autoref{fig:likert_helpsteer}.

Additionally, we measure the average length of generated completions for each method.
As detailed in \autoref{tab:token_length_analysis}, we observe a consistent pattern where the generated token lengths for DPO and BCO are similar to each other, while KTO and BCE also exhibit comparable token lengths.

\begin{table}[h]
    \centering
    \begin{tabular}{|c|c|c|c|}
        \hline
         &  & UltraFeedback & Capybara \\ \hline
        \multirow{4}{*}{Llama} & DPO & 326.3 $\pm$ 0.7 & 387.3 $\pm$ 10.7 \\ 
                                 & KTO & 298.2 $\pm$ 3.5 & 322.6 $\pm$ 5.30 \\ 
                                 & BCE & 282.6 $\pm$ 4.5 & 324.1 $\pm$ 5.40 \\ 
                                 & BCO & 333.1 $\pm$ 4.0 & 417.3 $\pm$ 15.5 \\ \hline
        \multirow{4}{*}{Qwen} & DPO & 325.1 $\pm$ 2.2 & 375.2 $\pm$ 4.50 \\ 
                                 & KTO & 317.1 $\pm$ 2.8 & 344.9 $\pm$ 5.00 \\ 
                                 & BCE & 310.2 $\pm$ 3.5 & 330.5 $\pm$ 9.00 \\ 
                                 & BCO & 336.8 $\pm$ 2.4 & 398.0 $\pm$ 10.8 \\ \hline
    \end{tabular}
    \caption{
      Token lengths of generated completions of \texttt{Llama-3.2-3B} and \texttt{Qwen2.5-3B} on UltraFeedback and Capybara datasets.
      Mean and standard deviations are shown.
      The number of generated tokens is approximately proportional to the performance of the model, as illustrated in \autoref{fig:pref_combined}.
      The generated token lengths for DPO and BCO are similar to each other, while KTO and BCE also exhibit comparable token lengths.
    }
    \label{tab:token_length_analysis}
\end{table}

\section{Conclusion}

This paper presents a theoretical foundation for aligning Large Language Models (LLMs) using readily available binary feedback, such as "thumbs-up" or "thumbs-down".
We demonstrate that training a binary classifier implicitly minimizes the Direct Preference Optimization (DPO) loss by mapping desirable outputs to positive labels and undesirable outputs to negative labels.
The binary cross-entropy (BCE) loss used in classifier training acts as an upper bound for minimizing DPO loss, and our proposed reward shift technique further reduces this discrepancy, resulting in stronger alignment.
Our theoretical analyses connects DPO and alignment from binary signal and reveals KTO's potential flaw in choosing a reference point.

Building on these insights, we introduce Binary Classifier Optimization (BCO) as a novel framework for aligning LLMs using binary feedback.
BCO's efficacy is validated through empirical results on paired preference datasets and real-world Likert-5 scale annotation datasets.
Our experiments demonstrate that BCO outperforms KTO and performs competitively with DPO on paired preference datasets.
Notably, on real-world data, BCO consistently surpasses both DPO and KTO across various LLM configurations, including Qwen and Llama, showcasing its robustness and applicability.
This binary classifier perspective on alignment offers a potential complement to preference-based alignment and could contribute to a deeper understanding of multi-stage preference tuning, paving the way for future advancements in AI alignment.

\section{Limitation}

The primary limitation of this research is the absence of real-world benchmarks utilizing binary annotations.
Practical evaluations, essential for demonstrating the utility of the proposed approach in real-world applications, are therefore limited.
Although binary feedback collection is easier and more natural compared to gathering pairwise preference data, particularly in real-world services such as ChatGPT or Claude, the lack of such benchmarks restricts the thoroughness of our evaluations.

Second, this research direction is still under development, with relatively few algorithms proposed to address the challenges in this field.
Consequently, it is difficult to conduct comprehensive analyses across different approaches, further limiting the scope of evaluation.

From an algorithmic perspective, the proposed method focuses on optimizing the upper bound of the Direct Preference Optimization (DPO) loss function which introduces a gap between the optimized upper bound and the actual DPO loss.
Minimizing an upper bound does not always equate to minimizing the original objective function, potentially leading to unintended effects on the model’s generalization and robustness.
Further investigation is needed to understand the impact of this gap on practical model performance.

Lastly, the algorithm relies on binary feedback, limiting its ability to fully utilize the rich comparative information available in preference datasets.
Preference data offers nuanced insights through pairwise comparisons, but the algorithm only captures binary positive/negative signals, leading to incomplete utilization of available information.
This limitation could result in suboptimal performance in tasks aimed at optimizing preference datasets.

\section*{Acknowledgments}

We thank Jiyeon Ham, Changmin Lee, Daejin Jo and Hyunwoong Ko for helpful and constructive feedback.

\bibliography{custom}

\clearpage

\appendix



\section{Proofs} \label{sec:proofs}

\subsection{The log of sigmoid of a sum exceeds the sum of the logs of the sigmoids} \label{subsec:sigmoid_lemma_proof}

\newtheorem*{lemma*}{Lemma}
\newtheorem*{theorem*}{Theorem}

\begin{lemma*}
The log of sigmoid of a sum exceeds the sum of the logs of the sigmoids. i.e. $\log \sigmoid (x + y) > \log \sigmoid(x) + \log \sigmoid(y)$ for all $x, y \in \mathbb{R}$ 
\end{lemma*}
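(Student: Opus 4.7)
The plan is to exponentiate both sides and reduce the inequality to a simple comparison between two explicit rational expressions. Since $\log$ is strictly increasing, $\log \sigma(x+y) > \log \sigma(x) + \log \sigma(y)$ is equivalent to $\sigma(x+y) > \sigma(x)\sigma(y)$. Writing out the sigmoids, this becomes
\begin{equation*}
\frac{1}{1+e^{-(x+y)}} > \frac{1}{(1+e^{-x})(1+e^{-y})}.
\end{equation*}

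Both denominators are strictly positive, so I would cross-multiply to obtain the equivalent inequality $(1+e^{-x})(1+e^{-y}) > 1+e^{-(x+y)}$. Expanding the left-hand side gives $1 + e^{-x} + e^{-y} + e^{-(x+y)}$, and after cancelling the common $1 + e^{-(x+y)}$ on both sides the inequality reduces to $e^{-x}+e^{-y}>0$, which is manifestly true for all real $x,y$ since the exponential is strictly positive.

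There is essentially no obstacle: the argument is a one-line algebraic manipulation once the sigmoid is written in closed form. I would only be careful to flag that every step is an equivalence (multiplications are by strictly positive quantities, $\log$ is strictly monotone), so the strict inequality is preserved throughout. As a side benefit, the proof also makes explicit the \emph{error term} $e^{-x}+e^{-y}$ quantifying the slack in the bound, which is precisely the quantity referenced in \autoref{thm:upper_bound} and minimized in \autoref{subsec:reward_shift}; so I would state it in the form
\begin{equation*}
\log \sigma(x+y) - \log \sigma(x) - \log \sigma(y) = \log\!\left(1 + \frac{e^{-x}+e^{-y}}{1+e^{-(x+y)}}\right) > 0,
\end{equation*}
which gives both the strict inequality and a clean handle on the gap that later sections exploit.
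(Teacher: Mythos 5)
Your proof is correct and is essentially the paper's own argument: both reduce the claim to the expansion $(1+e^{-x})(1+e^{-y}) = 1+e^{-(x+y)}+e^{-x}+e^{-y}$ and the strict positivity of $e^{-x}+e^{-y}$, the only cosmetic difference being that you exponentiate first while the paper works inside the logarithm. Your closing identity for the gap also matches the error term $e^{-x}+e^{-y}$ that the paper extracts from its Equation \ref{eq:upper_bound}.
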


\begin{proof}
\begin{align}
&\log \sigmoid (x + y)  = - \log \left( 1 + e^{- (x + y)} \right) \nonumber \\
&\log \sigmoid(x) + \log \sigmoid(y) \nonumber \\
&= - \log (1 + e^{-x} ) - \log (1 + e^{-y}) \nonumber \\
&= - \log \left( (1 + e^{-x})(1 + e^{-y}) \right) \nonumber \\
&= - \log (1 + e^{-(x + y)} + e^{-x} + e^{-y}) \label{eq:upper_bound}
\end{align}
As $e^{-x}$ and $e^{-y}$ are both greater than 0, the proposition holds.
\end{proof}

\subsection{BCE loss is the upper bound of DPO loss even under constant reward shift} \label{subsec:upper_bound_proof}

\begin{theorem*} 
Binary cross entropy is an upper bound of Direct Preference Optimization loss even if the reward is shifted by a constant $\delta$. i.e.
\begin{align*}
&\E_{(x, y_w, y_l) \sim \train} [-\log \sigmoid \left( r_\theta (x, y_w) - r_\theta(x, y_l) \right) ] \\
&< \E_{(x, y_w, y_l) \sim \train} [- \log \sigmoid(r_\theta(x, y_w) - \delta) \\
& \qquad - \log \sigmoid(- (r_\theta(x, y_l) - \delta))]
\end{align*}
\end{theorem*}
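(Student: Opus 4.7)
The plan is to reduce this theorem directly to Lemma~\ref{lem:log_bound} by exploiting the shift invariance of the reward difference. The critical observation is that the DPO loss on the left-hand side depends only on $r_\theta(x,y_w) - r_\theta(x,y_l)$, and this difference is invariant under a common additive shift $\delta$, since $(r_\theta(x,y_w)-\delta) - (r_\theta(x,y_l)-\delta) = r_\theta(x,y_w) - r_\theta(x,y_l)$. This means we can introduce the shift "for free" on the left-hand side before applying the lemma.

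First I would set $a = r_\theta(x, y_w) - \delta$ and $b = -(r_\theta(x, y_l) - \delta)$, so that $a + b = r_\theta(x, y_w) - r_\theta(x, y_l)$. Then I would apply Lemma~\ref{lem:log_bound} pointwise (for each $(x, y_w, y_l)$) to obtain
\begin{equation*}
\log \sigmoid(a + b) > \log \sigmoid(a) + \log \sigmoid(b),
\end{equation*}
which upon negation becomes
\begin{equation*}
-\log \sigmoid(r_\theta(x,y_w) - r_\theta(x,y_l)) < -\log \sigmoid(r_\theta(x,y_w) - \delta) - \log \sigmoid(-(r_\theta(x,y_l) - \delta)).
\end{equation*}
Finally, I would take expectations over $(x, y_w, y_l) \sim \train$ on both sides and use linearity of expectation to split the right-hand side into the two stated terms. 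Since the strict inequality holds pointwise, it carries over to the expectation (assuming the distribution is nontrivial, which is implicit in the setup).

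There is essentially no obstacle here: the heavy lifting was already done in Lemma~\ref{lem:log_bound}, and the only new ingredient is recognizing the shift invariance of the argument $a+b$. The proof is a direct substitution followed by expectation, so the write-up should be only a few lines. The one small subtlety worth noting, but not dwelling on, is that the strictness of the inequality is preserved under expectation because Lemma~\ref{lem:log_bound} gives strict inequality for all $x, y \in \mathbb{R}$, not merely almost-everywhere.
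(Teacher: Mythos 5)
Your proposal is correct and follows exactly the same route as the paper's proof: both exploit the shift invariance $(r_\theta(x,y_w)-\delta) - (r_\theta(x,y_l)-\delta) = r_\theta(x,y_w) - r_\theta(x,y_l)$ and then apply Lemma~\ref{lem:log_bound} pointwise with $a = r_\theta(x,y_w)-\delta$ and $b = -(r_\theta(x,y_l)-\delta)$ before taking expectations. Your remark on strictness surviving the expectation is a minor point the paper leaves implicit, but the argument is otherwise identical.
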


\begin{proof}
\begin{align*}
&\E_{\train}
\left[ - \log \sigmoid (r_\theta (x, y_w) - r_\theta (x, y_l)) \right] \\
&= \E_{\train}
\left[ - \log \sigmoid ((r_\theta (x, y_w) - \delta) - (r_\theta (x, y_l) - \delta)) \right] \\
&< \E_{\train}
[ - \log \sigmoid (r_\theta (x, y_w) - \delta) \\
&\qquad\qquad - \log \sigmoid (- (r_\theta(x, y_l) - \delta))]
\end{align*}
\end{proof}

\subsection{Optimal $\delta$ to minimizing the error term} \label{subsec:optimal_delta_proof}

\begin{theorem*}
The minimum of the error term $e^{-(r_\theta(x, y_w) - \delta)} + e^{r_\theta(x, y_l) - \delta}$ can be achieved when $\delta = (r_\theta(x, y_w) + r_\theta(x, y_l)) / 2$
\end{theorem*}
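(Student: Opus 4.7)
The plan is to treat the error term as a one-variable function of $\delta$ (with $r_\theta(x,y_w)$ and $r_\theta(x,y_l)$ fixed) and locate its minimum by elementary calculus, then verify the stationary point is a global minimum via convexity. Equivalently, one can package everything in a single line using the AM-GM inequality, which may be the cleanest presentation for the paper.

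For brevity write $a = r_\theta(x, y_w)$ and $b = r_\theta(x, y_l)$, so the error term is
\begin{equation*}
f(\delta) \;=\; e^{-(a - \delta)} + e^{b - \delta} \;=\; e^{\delta - a} + e^{b - \delta}.
\end{equation*}
First I would differentiate: $f'(\delta) = e^{\delta - a} - e^{b - \delta}$. Setting $f'(\delta) = 0$ yields $e^{\delta - a} = e^{b - \delta}$, and taking logs gives $\delta - a = b - \delta$, hence $\delta^\star = (a + b)/2$. I would then note $f''(\delta) = e^{\delta - a} + e^{b - \delta} > 0$ for all $\delta$, so $f$ is strictly convex and the stationary point $\delta^\star$ is the unique global minimizer.

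Alternatively, and more compactly, I would apply AM-GM directly:
\begin{equation*}
e^{\delta - a} + e^{b - \delta} \;\geq\; 2\sqrt{e^{\delta - a} \cdot e^{b - \delta}} \;=\; 2\, e^{(b - a)/2},
\end{equation*}
with equality exactly when $e^{\delta - a} = e^{b - \delta}$, i.e., $\delta = (a + b)/2$. Substituting back the definitions of $a$ and $b$ yields $\delta = (r_\theta(x, y_w) + r_\theta(x, y_l))/2$, as claimed.

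There is no real obstacle here — the statement is a routine one-variable optimization, and both the calculus route and the AM-GM route are immediate. The only mild subtlety worth flagging in the write-up is that the claim is a pointwise statement about the error term for a single $(x, y_w, y_l)$ triple, whereas the BCO algorithm (Equation~\ref{eq:bco_delta}) uses a single scalar $\delta$ computed as an expectation over the dataset; this explains why the practical choice of $\delta$ in BCO is only the minimizer of the error in expectation rather than for every individual example.
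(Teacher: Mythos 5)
Your proposal is correct, and your AM--GM argument is exactly the proof the paper gives (the calculus route is a harmless additional verification of the same fact). Your closing remark that the theorem is a pointwise statement while the algorithm's $\delta$ in \autoref{eq:bco_delta} is a single dataset-level expectation is a fair and worthwhile observation that the paper itself glosses over.
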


\begin{proof}
Due to AM-GM inequality, 
\begin{equation*}
e^{-(r_\theta(x, y_w) - \delta)} + e^{r_\theta(x, y_l) - \delta} \geq 2 \sqrt{e^{r_\theta(x, y_l) - r_\theta(x, y_w)}}
\end{equation*}
and the minimum is achieved if and only if $e^{-(r_\theta(x, y_w) - \delta)} = e^{r_\theta(x, y_l) - \delta}$.

If we take the logarithm of both sides and appropriately rearrange the equation, we get $\delta = (r_\theta(x,A y_w) + r_\theta(x, y_l)) / 2$.
\end{proof}

\section{Average implicit reward is proportional to negative KL} \label{sec:average_implicit_reward}

In this section, we replicate \citet{rafailov2024from}'s analysis of average implicit reward for self-completeness.

Expanding $\text{KL}(\pi_{\text{ref}} (\cdot \mid x) \Vert \pi_\theta (\cdot \mid x))$, we get expected implicit reward of a policy under the reference model. i.e.
\begin{align}
&- \beta \text{KL}(\pi_{\text{ref}} (\cdot \mid x) \Vert \pi_\theta (\cdot \mid x)) \nonumber \\
&\qquad = \mathbb{E}_{y \sim \pi_{\text{ref}} (\cdot \mid x)} \left[ \beta \log \frac{\pi_\theta (y \mid x)}{\pi_{\text{ref}} (y \mid x)} \right] \label{eq:average_implicit_reward}
\end{align}

if we run SFT on the preference dataset $\train$, which is common practice recommended by \citet{rafailov2023direct}, \autoref{eq:average_implicit_reward} is approximately equivalent to
\begin{equation*}
\frac{1}{2} \mathbb{E}_{y_w, y_l \sim \train} \left[ \beta \log \frac{\pi_\theta (y_w \mid x)}{\pi_{\text{ref}} (y_w \mid x)} + \beta \log \frac{\pi_\theta (y_l \mid x)}{\pi_{\text{ref}} (y_l \mid x)} \right]
\end{equation*}

\section{Implementations} \label{sec:implementation_details}
\label{app:implement}

During the initial supervised fine-tuning (SFT) phase, we trained the model for 3 epochs using a batch size of 128 and a learning rate of $1e-5$.
We set the maximum sequence length to 4096 and employed the AdamW optimizer \citep{Loshchilov2017DecoupledWD} in conjunction with a linear learning rate scheduler.

For the subsequent alignment training using DPO, KTO, BCE, or BCO techniques, we implemented a linear scheduler with a warm-up ratio of 0.1 on both the UltraFeedback and Capybara datasets.
We constrained the maximum token length to 2048, with a maximum prompt length of 1536 and a maximum completion length of 512.
The reward-KL trade-off coefficient $\beta$ was set to 0.1, and we used a learning rate of $5e-7$.
Given the size disparity between the datasets, we trained the models for 1 epoch on UltraFeedback and 4 epochs on Capybara, as the latter is approximately one-quarter the size of the former.

For training on the HelpSteer2 dataset \autoref{fig:likert_helpsteer}, we largely adhered to the methodology outlined by \citep{wang2024helpsteer2}.
Specifically, we trained the models for 7 epochs using a constant learning rate scheduler with a learning rate of $2e-7$.
The conversion of the HelpSteer2 dataset into a preference dataset resulted in an imbalance, as noted in \autoref{tab:stat_helpsteer2}.
To address this imbalance, we set $\lambda_U$ in \autoref{paragraph:kto} to $1.58 \approx \frac{(1 - 0.38)}{0.38}$.
For balancing in BCO, we employed oversampling of the thumbs-up dataset.
This adjustment was necessary to prevent the scale of the expected log-sigmoid rewards for the thumbs-up dataset in \autoref{eq:bco_objective} from being less than that of the thumbs-down dataset, which could lead to unstable training.

For the models presented in \autoref{tab:alignment_benchmark_results}, we conducted training for 3 epochs using a linear learning rate scheduler with a warmup ratio of 0.1. The learning rate was set to $5e-7$.
Throughout all training phases, we utilized mixed precision with bfloat16 to optimize computational efficiency.
Additionally, we implemented FlashAttention-2 \citep{dao2023flashattention2} to further enhance training performance.

\begin{figure*}[h]
    \centering
    \includegraphics[width=0.8\textwidth]{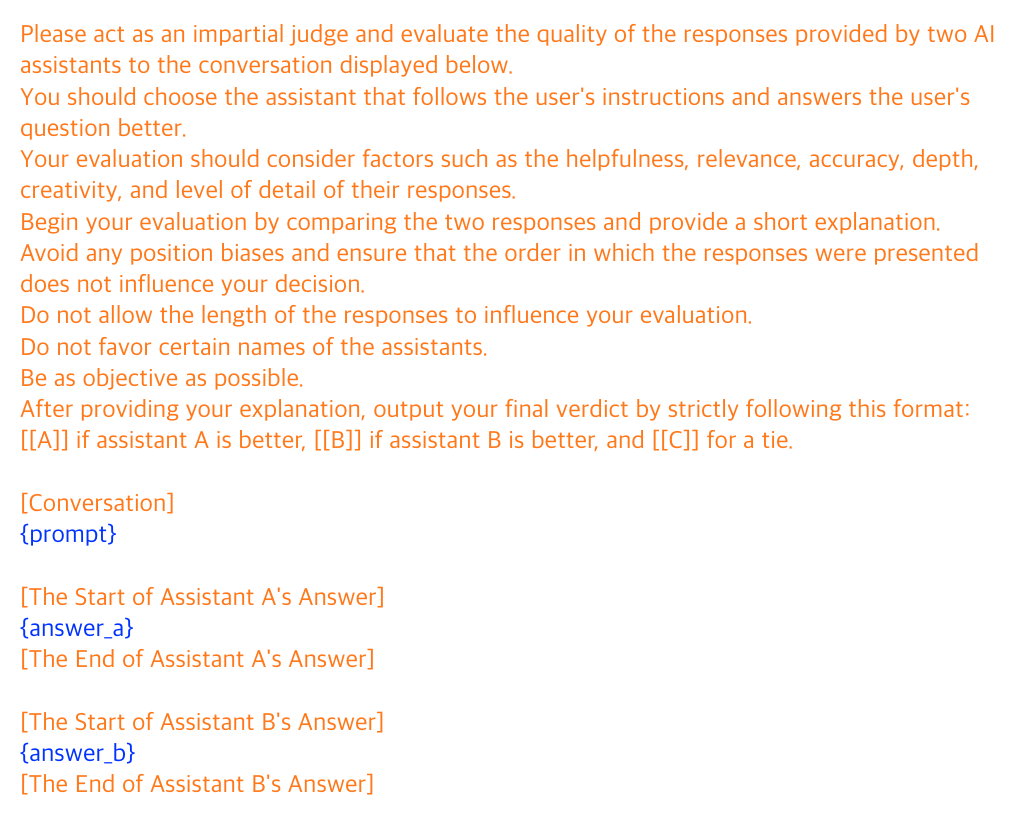}
    \caption{LLM as a judge prompt for UltraFeedback, Capybara, and HelpSteer2 datasets.}
    \label{fig:llm_as_a_judge_prompt}
\end{figure*}

For response generation from each model, we utilize top-p sampling with \( p = 0.95 \) and a temperature parameter of \( 0.7 \).
To measure the win rate using the "LLM as a judge" method, we borrow the judge prompt from \texttt{FastChat} \citep{zheng2023judging}.
See \autoref{fig:llm_as_a_judge_prompt} for "LLM as a judge" prompts.
Across all experiments we adopt \texttt{trl} \citep{vonwerra2022trl} implementations of DPO and KTO.

\section{HelpSteer2 Dataset statistics}

\begin{table}[h]
    \centering
    \begin{tabular}{|c|c|}
    \hline
    \# of preference triplet & 7,221 \\ \hline
    \# of binary triplet & 14,442 \\ \hline
    \% of thumbs-up & 38.65\% \\ \hline
    \end{tabular}
    \caption{
    Statistics of HelpSteer2 dataset after preprocess.
    Preference triplet denotes \{ prompt, chosen, rejected \} triplet and binary triplet denotes \{ prompt, completion, label \} triplet.
    }
    \label{tab:stat_helpsteer2}
\end{table}

In this section, we describe the statistics of HelpSteer2 dataset.
See \autoref{tab:stat_helpsteer2} for the details.

\section{Qualititive Results} \label{sec:qualititive_results}

In this section, we include qualitative experimental results.
We include completions of each method tuned on HelpSteer2; the results are shown in \autoref{tab:completions_after_tune}.
The completions are written after the prompt "\textit{can I adjust the widgets layout in the wp-admin}"

\begin{onecolumn}
\begin{longtable}{lp{0.8\textwidth}}
\toprule
\textbf{Method} & \textbf{Completion} \\
\midrule
\endfirsthead

\multicolumn{2}{c}{\tablename\ \thetable\ -- \textit{Continued from previous page}} \\
\toprule
\textbf{Method} & \textbf{Completion} \\
\midrule
\endhead

\midrule
\multicolumn{2}{r}{\textit{Continued on next page}} \\
\endfoot

\bottomrule
\caption{Completions of the prompt "\textit{can I adjust the widgets layout in the wp-admin}"}
\label{tab:completions_after_tune} \\
\endlastfoot

DPO & Yes, you can adjust the layout of widgets in the WordPress administration area (wp-admin).
Widgets are a feature of WordPress that allow you to add various types of content and functionality to your website\'s sidebar or other widget-ready areas.\newline\newline
To adjust the layout of widgets, you can follow these steps:\newline\newline
1. Log in to your WordPress administration area (wp-admin).\newline
2. Navigate to the "Appearance" menu and select "Widgets."\newline
3. You will see a list of available widget areas, such as the sidebar or footer, and a list of widgets that you can add to each area.\newline
4. To adjust the layout of the widgets, you can drag and drop the widgets into different positions within the widget area.
You can also add new widgets or remove existing ones as needed.\newline
5. Once you have arranged the widgets to your liking, click the "Save" button to apply the changes to your website.\newline\newline
It\'s important to note that the exact layout of widgets may vary depending on the theme you are using and the specific widget areas that are available.
Additionally, some widgets may have their own settings that you can adjust to customize their appearance and functionality. \\

KTO & Yes, you can adjust the layout of widgets in the WordPress admin area.
Here are a few tips:\newline\newline
1. Customize the sidebar: You can customize the layout of your sidebar by dragging and dropping widgets into different positions.
To do this, go to the Appearance section of the WordPress admin and select Widgets.
From there, you can drag and drop widgets into the sidebar areas that you want them to appear in.\newline\newline
2. Use a widget-ready theme: Some themes are designed to be widget-ready, which means that they have pre-defined areas where you can add widgets.
These areas are usually located in the sidebar, but they can also be in other areas of the page, such as the footer or the header.
If your theme is widget-ready, you can easily add and remove widgets to customize the layout of your site.\newline\newline
3. Use a page builder: If you want more control over the layout of your site, you can use a page builder plugin.
Page builders allow you to create custom layouts for your pages and posts, and you can add widgets to these layouts.
Some popular page builders include Elementor, Beaver Builder, and Visual Composer.\newline\newline
4. Use a custom plugin: If you want to add widgets to specific areas of your site that aren't widget-ready, you can use a custom plugin.
There are many plugins available that allow you to add widgets to custom areas, such as the header, footer, or anywhere else on your site.\newline\newline
Overall, adjusting the layout of widgets in the WordPress admin area is a simple process that can help you customize the look and feel of your site. \\

BCO & Yes, it is possible to adjust the layout of widgets in the WordPress administration (wp-admin) area.
Here are the steps to do it:\newline\newline
1. Log in to your WordPress website administration area.\newline
2. Click on the "Appearance" menu on the left-hand side of the page, and then click on "Widgets."\newline
3. You will see a list of widget areas, such as the sidebar, footer, and more.
Click on the widget area where you want to adjust the layout.\newline
4. You will see a list of widgets that you can add to the selected widget area.
Drag and drop the widgets to rearrange them in the order you want them to appear on your website.\newline
5. You can also add new widgets by clicking on the "Add a Widget" button, and remove widgets by clicking on the "Delete" button.\newline
6. To change the layout of a specific widget, click on the widget to open its settings.
Here, you can adjust the title, content, and other settings for the widget.\newline
7. Some widgets may have additional layout options, such as the ability to change the number of columns or the size of the widget.
These options will vary depending on the widget you are using.\newline\newline
Once you have adjusted the layout of your widgets, click on the "Save" button to save your changes.
Your website will automatically update with the new widget layout.\newline\newline
It\'s important to note that the exact process for adjusting widget layout may vary slightly depending on the theme you are using and the specific widgets you have installed on your website.
However, the general steps should be similar to the ones outlined above.
If you encounter any issues, you may want to consult your theme\'s documentation or reach out to the theme developer for support.
Additionally, if you are not comfortable making changes to your website\'s code or design, you may want to consider hiring a professional WordPress developer to assist you with making these changes. \\

\end{longtable}
\end{onecolumn}

\end{document}